\documentclass[letterpaper]{article} 
\usepackage{aaai2027}  
\usepackage[hyphens]{url}  
\usepackage{graphicx} 
\usepackage{natbib}  
\usepackage{caption} 
\usepackage{algorithm}
\usepackage{algpseudocode}
\usepackage{times}

\usepackage{latexsym}
\usepackage{amssymb}
\usepackage{amsmath}
\usepackage{amsthm}
\usepackage{booktabs}
\usepackage{enumitem}
\usepackage{graphicx}
\usepackage{xcolor}
\usepackage{array}
\usepackage{multirow}
\usepackage{amsfonts}
\usepackage[dvipsnames]{xcolor}
\usepackage{tikz}
\usepackage{pgfplots}
\usetikzlibrary{pgfplots.statistics}
\usepackage{subcaption}
\usepackage{xspace}
\usepackage{caption}
\usepackage{makecell}
\usepackage{xcolor}
\usepackage{listings}
\usepackage{newfloat}
\DeclareFloatingEnvironment[
   fileext=lol,
   name=Listing
]{listing}

\usepackage{subcaption}
\DeclareCaptionSubType{listing}

\lstdefinelanguage{pddl}
{
    keywords={
        define,
    },
    keywordstyle=[2]{\color{red}},
    morekeywords={[2]
        domain,
        problem,
        requirements,
        predicates,
        types,
        objects,
        action,
        init,
        goal,
    },
    keywordstyle=[3]{\color{MidnightBlue}},
    morekeywords={[3]
        parameters,
        vars,
        precondition,
        effect,
    },
    keywordstyle={[4]\color{OliveGreen}},
    morekeywords={[4]
        forall,
        or,
        and,
        not,
        when,
        =,
        exists,
        imply,
        oneof
    },
    comment=[l]{\;},
    sensitive=true
}[keywords, comments]

\lstdefinelanguage[mapl]{pddl}[]{pddl}
{
    morekeywords={[2]
      sensor,
    },
    morekeywords={[3]
      sense,
      replan
    },
    morekeywords={[4]
      KIF,
    },
}

\usepackage{newfloat}
\usepackage{listings}
\DeclareCaptionStyle{ruled}{labelfont=normalfont,labelsep=colon,strut=off} 
\floatstyle{ruled}
\newfloat{listing}{tb}{lst}{}
\floatname{listing}{Listing}
\title{Planning Task Shielding: Detecting and Repairing Flaws in Planning Tasks through Turning them Unsolvable}

\author{
    Alberto Pozanco, Pietro Totis, Marianela Morales, Daniel Borrajo
}
\affiliations{
    J.P. Morgan AI Research\\\{alberto.pozancolancho,marianela.moraleselena\}@jpmorgan.com, \{name.surname\}@jpmorgan.com}

\usepackage{bibentry}

\begin{document}

\newcommand{\marianela}[1]{\textcolor{orange}{#1}}
\newcommand{\pietro}[1]{\textcolor{violet}{#1}}
\newcommand{\alberto}[1]{\textcolor{blue}{#1}}
\newcommand{\daniel}[1]{\textcolor{red}{#1}}

\newcommand{\fluents}{\ensuremath{{\cal F}}\xspace}
\newcommand{\actions}{\ensuremath{{\cal A}}\xspace}
\newcommand{\init}{\ensuremath{{\cal I}}\xspace}
\newcommand{\goal}{\ensuremath{{\cal G}}\xspace}
\newcommand{\cost}{\ensuremath{c}\xspace}
\newcommand{\costfunction}{\ensuremath{C}\xspace}
\newcommand{\task}{\ensuremath{{\cal P}}\xspace}
\newcommand{\plan}{\ensuremath{\pi}\xspace}
\newcommand{\plans}{\ensuremath{\Pi}\xspace}
\newcommand{\stripstask}{\ensuremath{\task=\langle \fluents, \actions, \init, \goal \rangle}\xspace}
\newcommand{\state}{\ensuremath{s}\xspace}
\newcommand{\allstates}{\ensuremath{\cal S}\xspace}
\newcommand{\reachablestates}{\ensuremath{{\cal S}_R}\xspace}
\newcommand{\fluent}{\ensuremath{f}\xspace}
\newcommand{\action}{\ensuremath{a}\xspace}
\newcommand{\name}{\ensuremath{\textsc{name}}\xspace}
\newcommand{\precondition}{\ensuremath{\textsc{pre}}\xspace}
\newcommand{\effects}{\ensuremath{\textsc{eff}}\xspace}
\newcommand{\addeffects}{\ensuremath{\textsc{add}}\xspace}
\newcommand{\deleffects}{\ensuremath{\textsc{del}}\xspace}
\newcommand{\actionapplication}{\ensuremath{\gamma}\xspace}
\newcommand{\planapplication}{\ensuremath{\Gamma}\xspace}
\newcommand{\alternativeplans}[1]{\ensuremath{\plans^{#1}}\xspace}
\newcommand{\alternativeplan}{\ensuremath{\plan^{\mathsf{a}}}\xspace}

\newcommand{\fluentsticky}{\ensuremath{F_{c}}\xspace}
\newcommand{\goalhat}{\bar{\ensuremath{G}}\xspace}
\newcommand{\subgoal}{\ensuremath{g}\xspace}
\newcommand{\Sticky}{\mathsf{commit}\xspace}
\newcommand{\sticky}{\mbox{-}\mathsf{commit}\xspace}
\newcommand{\forcesticky}{\mbox{-}\mathsf{forcecommit}\xspace}
\newcommand{\Forcesticky}{\mathsf{forcecommit}\xspace}
\newcommand{\actionssticky}{\actions^{\mathsf{C}}\xspace}
\newcommand{\actionsnoncommit}{\actions^{\neg\mathsf{C}}\xspace}
\newcommand{\actionnoncommit}{\action^{\neg\mathsf{C}}\xspace}
\newcommand{\actioncommit}{\action^{\mathsf{C}}\xspace}
\newcommand{\actionsaddgoals}{\actions^\goal\xspace}
\newcommand{\actionsdeletegoals}{\actions^{\neg\goal}\xspace}
\newcommand{\actionslonely}{\actions^\ensuremath{L}\xspace}
\newcommand{\stickytask}{\ensuremath{{\cal P}_{\mathsf{c}}}\xspace}
\newcommand{\actionaddgoals}{\action^\goal\xspace}
\newcommand{\actiondeletegoals}{\action^{\neg\goal}\xspace}
\newcommand{\commitgoals}{\ensuremath{C}_{\actionaddgoals}\xspace}
\newcommand{\noncommitgoals}{\ensuremath{C}_{\actiondeletegoals}\xspace}

\newcommand{\atpackage}[3]{\textsf{at}(\mathsf{#1\mbox{-}#2}\mbox{ }\mathsf{#3})}
\newcommand{\at}[2]{\textsf{at}(\mathsf{#1}\mbox{ }\mathsf{#2})}
\newcommand{\package}[1]{\textsf{package}\textsf{#1}}
\newcommand{\drivetruck}[2]{\textsf{drivetruck}\mbox{ }\mathsf{#1}\mbox{ }\mathsf{#2}}
\newcommand{\loadtruck}[2]{\textsf{loadtruck}\mbox{ }\mathsf{#1}\mbox{-}\mathsf{#2}}
\newcommand{\unloadtruck}[2]{\textsf{unloadtruck}\mbox{ }\mathsf{#1}\mbox{-}\mathsf{#2}}

\newtheorem{definition}{Definition}
\newtheorem{proposition}{Proposition}
\newtheorem{lemma}{Lemma}
\newtheorem{corollary}{Corollary}
\newtheorem{example}{Example}
\newtheorem{remark}{Remark}

\newcommand{\lazytask}{\task_{\mathsf{D}}}
\newcommand{\lazyfluents}{ \fluents_{\mathsf{D}}}
\newcommand{\lazyactions}{\actions_{\mathsf{D}}}
\newcommand{\lazyinit}{\init_{\mathsf{D}}}
\newcommand{\lazygoal}{\goal_{\mathsf{D}}}

\newcommand{\eagertask}{\task_{\cal{D}}}

\newcommand{\lamafirst}{\ensuremath{\textsc{lamaF}}\xspace}

\newcommand{\plandisruption}{\ensuremath{{\cal D}}\xspace}

\newcommand{\shielding}{shielding\xspace}
\newcommand{\Shielding}{Shielding\xspace}

\newcommand{\allplansmin}{\textsc{allmin}\xspace}
\newcommand{\compilation}{\textsc{aga}\xspace}
\newcommand{\dfs}{\textsc{dfs}\xspace}
\newcommand{\bfs}{\textsc{bfs}\xspace}
\newcommand{\mcts}{\textsc{mcts}\xspace}
\newcommand{\setofmodifications}{\ensuremath{{\cal M}}\xspace}
\newcommand{\modification}{\ensuremath{{\mu}}\xspace}

\definecolor{solvability}{HTML}{8FAADC}
\newcommand{\solvability}[1]{\textcolor{solvability}{#1}}
\definecolor{repair}{HTML}{F4B183}
\newcommand{\repair}[1]{\textcolor{repair}{#1}}
\definecolor{test}{HTML}{FFD966}
\newcommand{\test}[1]{\textcolor{test}{#1}}
\definecolor{unsolvable}{HTML}{A9D18E}
\newcommand{\unsolvable}[1]{\textcolor{unsolvable}{#1}}

\newcommand{\gab}{\textsc{Goal Achiever Blocking}\xspace}

\maketitle

\begin{abstract}
Most research in planning focuses on generating a plan to achieve a desired set of goals.
However, a goal specification can also be used to encode a property that should never hold, allowing a planner to identify a trace that would reach a flawed state.
In such cases, the objective may shift to modifying the planning task to ensure that the flawed state is never reached, in other words, to make the planning task unsolvable.
In this paper we introduce planning task shielding: the problem of detecting and repairing flaws in planning tasks.
We propose three different approaches to solve these tasks, each aiming to minimize the number of modifications to the original actions that render the planning task unsolvable.
\end{abstract}

\section{Introduction}
Classical planning is the task of finding a plan, which is a sequence of deterministic actions that, when executed from a given initial state, lead to a state where some given goals are true~\cite{ghallab2004automated}.
Most research in planning focuses on generating plans to solve the given task, assuming such a plan exists.

However, planning can also be applied in the opposite way.
This approach involves formalizing the system and the security property to be verified as a planning task. 
If this planning task is proven to be unsolvable \cite{eriksson2017unsolvability,staahlberg2021learning}, it indicates that the security property is upheld within the system. 
Conversely, if a solution is found, the resulting plan outlines a sequence of steps or actions that could potentially falsify the security property.
This approach to planning has been utilized to identify flaws in cybersecurity systems~\cite{boddy2005course,hoffmann2015simulated} and cryptographic protocols~\cite{pozanco2021proving}.
In these systems, when a flaw is detected, a domain expert reviews the plan that leads to it and manually modifies the system’s dynamics (its actions) to prevent that trace from occurring, hoping this will resolve the issue.
However, addressing the flaw locally may introduce new vulnerabilities elsewhere in the system, potentially resulting in a cycle that is tedious and difficult to resolve.

\begin{listing}
\begin{lstlisting}[language=pddl,numbers=none,numbersep=0,xleftmargin=0.01\parindent]
(:action submit_application
  :parameters ()
  :precondition (documents_submitted)
  :effect (and (application_complete)
               (not (documents_submitted))))

(:action direct_approval
  :parameters ()
  :precondition (application_complete)
  :effect (granted_approval))

(:action escalation
  :parameters ()
  :precondition 
  (and (application_complete) 
       (client_concerns))
  :effect (escalated))
\end{lstlisting}
\caption{Approval workflow described as PDDL actions.}
\label{lst:rule_example}
\end{listing}
We illustrate these type of problems with a simple running example (Listing~\ref{lst:rule_example}), which describes an approval workflow in PDDL~\cite{haslum2019introduction}. 
Since the workflow may have been created by non-experts, it could contain errors and represents a best-effort formalization. 
The set of fluents is $\langle \mathsf{documents\_submitted}$, $\mathsf{application\_complete}$, $\mathsf{granted\_approval}$, $\mathsf{escalated}$, $\mathsf{client\_concerns}$, $\mathsf{safe\_client}\rangle$.
The $\mathsf{submit\_application}$ action completes an application if all documents are submitted; $\mathsf{direct\_approval}$ allows direct approval for completed applications, and $\mathsf{escalation}$ handles cases with client concerns. 
We may want to check if flawed states can arise, such as reaching a state where an application is both $\mathsf{granted\_approval}$ and $\mathsf{escalated}$, starting from $\mathsf{documents\_submitted}$ and $\mathsf{client\_concerns}$. 
The plan $\plan = (\mathsf{submit\_application}, \mathsf{escalation}, \mathsf{direct\_approval})$ achieves this, indicating the workflow is ill-defined. 
Addressing this requires modifying actions, such as removing $\mathsf{escalated}$ from the $\mathsf{escalation}$ action effects; or adding $\mathsf{safe\_client}$ as a precondition to $\mathsf{direct\_approval}$.
Selecting the best fix depends on the domain and action semantics, and care is needed, as local changes may introduce new vulnerabilities elsewhere in the workflow.

In this paper, we propose an extension to the traditional approach of identifying flaws in systems represented as planning tasks by introducing the capability to automatically fix these flaws.
Our method focuses on making the planning task unsolvable, in contrast to domain repair works~\cite{gragera2023planning,lin2023towards, bercher2025survey,gragera2025gains}, which aim to modify the planning task to make it solvable.
We refer to this problem as \emph{planning task shielding}, and formally define it as finding the minimal set of precondition additions, add effect removals, and delete effect additions to the original set of actions such that the planning task becomes unsolvable.

We then propose three different approaches to solve these tasks, each offering distinct theoretical guarantees and empirical performance.
The first approach computes all loopless plans that solve the planning task, and then formulates an optimization problem to determine the minimal set of modifications required to invalidate all these plans.
The second approach introduces a family of algorithms that search within the space of possible modifications to render the planning task unsolvable.
The third approach conducts a static analysis of the planning task to identify actions that achieve the goals, and then modifies these actions so that the goals can no longer be achieved.

The remainder of the paper is organized as follows. 
We first formalize classical planning and Monte Carlo Tree Search.
We then provide a formal definition of planning task shielding problems and their solutions, followed by a presentation of our three approaches for solving these tasks. 
Next, we evaluate the proposed algorithms on existing and novel benchmarks, highlighting the trade-offs between their theoretical guarantees and empirical performance.
Finally, we conclude by discussing the main results and outlining potential directions for future research.
\section{Background}
\subsection{Classical Planning}
We formally define a planning task as follows:
\begin{definition}[Planning task]\label{def:strips-plan-task}
  A planning task
is a tuple \stripstask, where \fluents is a set of fluents, \actions is a set of
 actions, $\init \subseteq \fluents$ is an initial state, and $\goal\subseteq \fluents$ is a goal specification.  
\end{definition}

A state $\state \subseteq \fluents$ is a set of fluents that are true at a given time.
A state $\state \subseteq \fluents$ is a goal state iff $\goal \subseteq \state$.
Each action $\action \in \actions$ is characterized by the following components.
A set of preconditions $\precondition(\action)$, which are set of fluents that need to be true for the action to be applied. 
Add and delete effects $\addeffects(\action)$ and $\deleffects(\action)$, which are set of fluents that are added (or deleted) once the action is applied.
We assume $\addeffects(\action) \cap \deleffects(\action) = \emptyset$.
And finally a cost $\cost(\action) \in \mathbb{R}$ associated with performing the action.
An action \action is applicable in a state \state iff $\precondition(\action)\subseteq~s$.
We define the result of applying an action in a state as $\actionapplication(\state,\action)=(\state \setminus \deleffects(\action)) \cup \addeffects(\action)$.
A sequence of actions $\plan=(\action_1,\ldots,\action_n)$ is applicable in a state $\state_0$ if there are states $(\state_1.\ldots,\state_n)$ such that $\action_i$ is applicable in $\state_{i-1}$ and $\state_i=\actionapplication(\state_{i-1},\action_i)$.
The resulting state after applying a sequence of actions is $\planapplication(\state,\pi)=\state_n$, and $\cost(\plan) = \sum_{i}^n \cost(\action_i)$ denotes the cost of $\plan$.
A sequence of actions is simple if it does not traverse the same state more than once.
The solution to a planning task $\task$ is a plan, i.e., a sequence of actions $\plan$ such that $\goal \subseteq \planapplication(\init,\pi)$.
We denote as $\Pi(\task)$ the set of all simple solution plans to planning task $\task$.
A plan with minimal cost is optimal.
A planning task is unsolvable if there is no sequence of actions $\plan$ such that $\goal \subseteq \Gamma(\init, \plan)$.

\subsection{Monte Carlo Tree Search}\label{sec:mcts}
Monte Carlo Tree Search (MCTS)~\cite{mcts_survey,mcts_survey_recent} incrementally builds a search tree over large decision spaces, with states as nodes and actions as edges, and is particularly effective where exhaustive enumeration is infeasible~\cite{MCTS_app}, for example in games~\cite{mcts_general_game_playing,alphago}. Each iteration follows four steps: a tree policy selects a path to a leaf (selection); a new child is created for a chosen action (expansion); a default policy estimates the leaf's value (simulation); and ancestor statistics are updated accordingly (backpropagation). A popular tree policy is UCT~\cite{UCT}, which scores action $a$ in state $s$ as $Q(s,a) + C\cdot\sqrt{\frac{\ln N(s)}{N(s,a)}}$, where $Q(s,a)$ is the reward estimate, $N(s)$ is the visit count for $s$, and $N(s,a)$ is the count for action $a$ in $s$; the constant $C$, typically $\sqrt{2}$ when $Q\in [0,1]$~\cite{mcts_survey_recent}, controls the trade-off between exploiting high-reward actions and exploring less-visited ones.
When the branching factor is large, \emph{progressive widening}~\cite{mcts_pw2,mcts_pw} limits expansion to a growing subset of actions, introducing a new one only when $\lfloor N(s)^\alpha \rfloor \ge |\mathit{children}(s)|$ for $\alpha\in(0,1)$. 

\section{Shielding Planning Tasks}
Given a system formalized as a planning task, where the goal specifies a property that should never be satisfied, our objective is to \emph{shield} the system. 
This means modifying the planning task so that these modifications render it unsolvable.
Although planning tasks can be made unsolvable by modifying $\init$, $\goal$, or $\actions$, we focus exclusively on the latter. 
\begin{remark}
    Note that by limiting ourselves to modifying $\actions$, we are unable to turn unsolvable planning tasks where $\goal \subseteq \init$, i.e., those tasks where the empty plan $\pi=\emptyset$ is already a solution.
\end{remark}
To formalize this approach, we first define what constitutes an atomic action modification.

\begin{definition}[Action Modification]\label{def:action-modification}
    Given a planning task $\stripstask$, an action modification $\mu$ maps an action $\action \in \actions$ to an action $\mu(\action) = \action'$ obtained by adding or removing a single fluent $f \in \fluents$ in exactly one of $\precondition(\action)$, $\addeffects(\action)$, or $\deleffects(\action)$.
\end{definition}
We write $\setofmodifications^*(\task)$ for the set of all action modifications over the actions in $\actions$, abbreviated $\setofmodifications^*$ when $\task$ is clear from context.
\begin{remark}
We assume that applying several action modifications to an action $\action$ yields a single well-defined modified action $\action'$. In particular, the final action $\action'$ does not depend on the order in which the modifications are applied.
\end{remark}

With these notions in place, we can now define a shielding planning task and its solution.
\begin{definition}[Shielding Planning Task]
    A shielding planning task $\cal S$ is a tuple ${\cal S} = \langle \task, \setofmodifications \rangle$, where $\stripstask$ is a planning task and $\setofmodifications \subseteq \setofmodifications^*$ is a set of allowed modifications.
\end{definition}

\begin{definition}[Shielding Solution]\label{def:shielding-solution} A solution to a shielding planning task ${\cal S} = \langle \task, \setofmodifications \rangle$ is a set of action modifications $M \subseteq \setofmodifications$ that induces a new set of actions $\actions^\prime$, such that the resulting planning task $\task_{\cal S} = \langle \fluents, \actions^\prime, \init, \goal \rangle$ is unsolvable.
\end{definition}

Although trivial modifications are possible, our aim is to identify the minimal set of action modifications that render the planning task unsolvable.
We formalize this notion of optimality as follows:
\begin{definition}[Shielding Solution Optimality]\label{def:optimality}
A Shielding Solution $\actions^\prime$ obtained with modifications $M$ is optimal iff there does not exist another solution $\actions^{\prime\prime}$ obtained with modifications $M'$ and $|M'| < |M|$. 
\end{definition}

In order to compute optimal (or high quality) solutions for a shielding planning task, we do not need to reason about all the possible ways in which an action can be modified.
In particular, we can restrict ourselves to the set of action modifications that reduce the number of plans that solve the original planning task.

\begin{proposition}[Monotonic Decrease of Solution Plans]\label{prop:decrease_plans}
Let $\task=\langle \fluents, \actions, \init, \goal\rangle$ be a planning task and let $\task'=\langle \fluents, \actions', \init, \goal\rangle$ be obtained from $\task$ by modifying actions only by:
\begin{itemize}
    \item adding preconditions, $\precondition(\action)\subseteq \precondition(\action')$
    \item removing add effects, $\addeffects(\action')\subseteq \addeffects(\action)$
    \item adding delete effects, $\deleffects(\action)\subseteq \deleffects(\action')$
\end{itemize}
Then, after applying any modification, the number of plans that solve the task decreases monotonically, i.e., $\lvert \Pi(\task')\rvert \le \lvert \Pi(\task)\rvert$.
\end{proposition}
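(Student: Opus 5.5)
The plan is to build an injection from the solution plans of $\task'$ into those of $\task$. Actions are treated as named objects: each modified action $\action' \in \actions'$ comes from a unique original action $\action \in \actions$, via the modifications in Definition~\ref{def:action-modification} and the order-independence Remark. So any sequence $\plan'=(\action'_1,\ldots,\action'_n)$ over $\actions'$ has a corresponding sequence $\plan=(\action_1,\ldots,\action_n)$ over $\actions$. Because the correspondence $\action'\mapsto\action$ is a bijection on names, $\plan'\mapsto\plan$ is injective. This holds even if two modified actions happen to have identical components: we count them as distinct, exactly as the originals are.

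\textbf{Step 1 (state domination).} I will show by induction on $i$ that if $\plan'$ is applicable in $\init$ under $\task'$, then $\plan$ is applicable in $\init$ under $\task$, and the traversed states satisfy $\state'_i \subseteq \state_i$ for all $i$. The base case is $\state'_0=\state_0=\init$. For the inductive step, assume $\state'_{i-1}\subseteq \state_{i-1}$ and $\precondition(\action'_i)\subseteq \state'_{i-1}$. Then
\[
\precondition(\action_i)\subseteq\precondition(\action'_i)\subseteq \state'_{i-1}\subseteq \state_{i-1},
\]
so $\action_i$ is applicable. For the successor states we have
\[
\state'_i=(\state'_{i-1}\setminus\deleffects(\action'_i))\cup\addeffects(\action'_i)\subseteq(\state_{i-1}\setminus\deleffects(\action_i))\cup\addeffects(\action_i)=\state_i .
\]
This uses $\deleffects(\action_i)\subseteq\deleffects(\action'_i)$ and $\addeffects(\action'_i)\subseteq\addeffects(\action_i)$.

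\textbf{Step 2 (goal preservation).} If $\goal\subseteq \state'_n$, then $\goal\subseteq \state_n$. Hence every plan of $\task'$ maps to a plan of $\task$, and the injection gives $|\Pi(\task')|\le|\Pi(\task)|$, provided the image lies in $\Pi(\task)$.

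\textbf{Main obstacle: simplicity.} $\Pi$ counts only \emph{simple} plans. Distinct states $\state'_i\neq \state'_j$ in $\task'$ may be dominated by equal states $\state_i=\state_j$ in $\task$, so the image of a simple plan need not be simple. I would handle this in one of two ways.

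The first option is to first establish the statement for the set of all solution plans, bounded by any fixed length, where Steps 1--2 give the result directly.

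The second option, for simple plans, is to replace the image $\plan$ by its loop-free reduction: repeatedly cut the segment between repeated states. The reduction is still a solution plan of $\task$, since it reaches the same final state. Injectivity then has to be recovered by a charging argument. Concretely, I would show that the loop-cut map restricted to simple plans of $\task'$ is injective, by arguing that a cut segment in $\task$ cannot correspond to a segment that is needed for a distinct simple plan of $\task'$. If that fails, I would aim to prove the weaker but natural statement $\Pi(\task')=\emptyset$ whenever $\Pi(\task)=\emptyset$, together with set inclusion of action-name sequences up to loop removal. I expect this simplicity issue, not the domination argument, to be the only delicate point.
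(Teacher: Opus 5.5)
\textbf{Your Steps 1--2 match the paper, but the simplicity problem you flag cannot be fixed: the statement is false for simple plans.} Steps 1--2 are the paper's proof. Your induction carrying $s'_{i-1}\subseteq s_{i-1}$ is in fact tighter than the paper's argument, which only compares $\action$ and $\action'$ applied to a common state. The paper then asserts $\Pi(\task')\subseteq\Pi(\task)$ without ever mentioning simplicity, so the issue you flag is a real hole in the published argument as well. Your proposal leaves it open, and your second option cannot be completed.

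Here is a counterexample. Take $\fluents=\{p,g\}$, $\init=\{p\}$ and $\goal=\{g\}$. Let $x$ be a no-op, with empty precondition, add and delete lists. Let $y$ have $\precondition(y)=\deleffects(y)=\emptyset$ and $\addeffects(y)=\{g\}$.
\begin{itemize}
\item In $\task$, $x$ is a self-loop in every state, so $\Pi(\task)=\{(y)\}$.
\item Add $p$ to $\deleffects(x)$, a single allowed modification, to obtain $x'$.
\item In $\task'$, the plans $(y)$, $(y,x')$ and $(x',y)$ are all simple solutions. The last two traverse $\{p\},\{p,g\},\{g\}$ and $\{p\},\emptyset,\{g\}$ respectively.
\end{itemize}
Hence $\lvert\Pi(\task')\rvert=3>1=\lvert\Pi(\task)\rvert$. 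The images $(y,x)$ and $(x,y)$ are exactly the non-simple plans you anticipated. No loop-cutting or charging argument can inject a three-element set into a one-element set.

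What Steps 1--2 do prove is an inclusion between the sets of all solution plans, simple or not, read as sequences of action names. That gives your first option: for every $L$,
$\lvert\{\plan : \plan \text{ solves } \task',\ \lvert\plan\rvert\le L\}\rvert\le\lvert\{\plan : \plan \text{ solves } \task,\ \lvert\plan\rvert\le L\}\rvert$.

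Cutting a loop preserves the final state, so you can apply it in $\task'$ and again in $\task$. This yields your fallback, $\Pi(\task)=\emptyset\Rightarrow\Pi(\task')=\emptyset$: these modifications can never turn an unsolvable task into a solvable one.

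Those are the statements to commit to. The cardinality claim over simple plans should be dropped or restated.
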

\begin{proof}
Let $s\subseteq \fluents$ be any state, and let $\action'\in \actions'$ correspond to $\action\in \actions$. Suppose $\action'$ is applicable in $s$ in $\task'$. Then, by definition, $\precondition(\action')\subseteq s$. Since $\precondition(\action)\subseteq \precondition(\action')$, it follows that $\precondition(\action)\subseteq s$, so $\action$ is also applicable in $s$ in $\task$. Thus, any action applicable in $\task'$ at a state is also applicable in $\task$ at the same state. Conversely, suppose $\action$ is applicable in $s$ in $\task$ (i.e., $\precondition(\action)\subseteq s$), if there exists $p\in \precondition(\action')\setminus \precondition(\action)$ with $p\notin s$, then $\action'$ is not applicable in $s$ in $\task'$. Therefore, the set of applicable actions in any state in $\task'$ is a subset of those in $\task$.

For any state $s$ where $\action'$ is applicable, the resulting state after applying $\action'$ in $\task'$ is $s' = (s \setminus \deleffects(\action')) \cup\addeffects(\action')$. Since $\addeffects(\action')\subseteq \addeffects(\action)$ and $\deleffects(\action)\subseteq \deleffects(\action')$, it follows that $s \setminus \deleffects(\action')\subseteq s \setminus \deleffects(\action)$ and $(s \setminus \deleffects(\action')) \cup\addeffects(\action')\subseteq (s \setminus\deleffects(\action)) \cup\addeffects(\action)$. Thus, the state reached by applying $\action'$ in $\task'$ is a subset of the state reached by applying $\action$ in $\task$.

Consider any plan $\plan' = (\action'_1, \ldots, \action'_n)$ that solves $\task'$. For each $i$, let $\action_i$ be the corresponding action in $\actions$. By the above, $\plan = (\action_1, \ldots, \action_n)$ is also executable in $\task$ from $\init$, and the sequence of states reached in $\task$ contains those reached in $\task'$. Since the goal is the same, if $\plan'$ reaches $\goal$ in $\task'$, then $\pi$ also reaches $\goal$ in $\task$.

Therefore, every plan that solves $\task'$ also solves $\task$, i.e., $\Pi(\task') \subseteq \Pi(\task)$, and thus $\lvert \Pi(\task')\rvert \le \lvert \Pi(\task)\rvert$.
\end{proof}

In addition to restricting the set of modifications, we can also limit the subset of actions that need to be modified to render $\task$ unsolvable.
Let $\actions^{\Pi} = \{ \action \mid \action \in \plan, \forall \plan \in \Pi(\task) \}$ denote the set of actions that appear in at least one plan solving the original task.
\begin{remark}\label{remark:mod_actions}
    It suffices to consider modifications to actions in $\actions^{\Pi}$ (and not the remaining actions in $\actions$) to render $\task$ unsolvable.
\end{remark}

Next, we present our three different approaches to solve shielding planning tasks.

\section{\allplansmin}
\allplansmin follows a two-step process: (1) it computes all the simple (loopless) plans that can solve the original planning task, $\Pi(\task)$; and (2) it determines the set of minimal modifications to the original actions $\actions$ that would block the execution of all the plans in $\Pi(\task)$.

To compute the set of plans that solve the original planning task, we can utilize any planner capable of generating not just a single plan, but a set of plans for a given task~\cite{katz2018novel,speck-et-al-aaai2020,speck2025counting}.
We will provide further details about the specific tool used in our experimental setup.

We formulate the task of computing the minimal modifications to the original actions that would block the execution of all the plans as a Mixed-Integer Linear Program (MILP).
The MILP receives as input the planning task $\task$, and the set of plans that solve it $\Pi(\task)$.
We reduce the number of variables and constraints needed by leveraging Proposition~\ref{prop:decrease_plans} and Remark~\ref{remark:mod_actions}.
We add a fake action $a^g$ to $\actions$, which represents the achievement of the goals $\goal$.
$\precondition(a^g) = \goal$ and $\addeffects=\deleffects=\emptyset$.
We append this action to each plan $\pi \in \Pi(\task)$.
We have the following set of decision variables:
\begin{itemize} 
\item $\mathrm{pre}_{a,f} \in \{0,1\}$: 1 if fluent $f$ is added as a precondition to action $a$. 
\item ${\mathrm{add}}_{a,f} \in \{0,1\}$: 1 if fluent $f$ is removed from the add effects of action $a$. 
\item ${\mathrm{del}}_{a,f} \in \{0,1\}$: 1 if fluent $f$ is added to the delete effects of action $a$. 
\item $s_{\pi,i,f} \in \{0,1\}$: 1 if fluent $f$ holds after step $i$ in plan $\pi$. 
\item $\mathrm{enabled}_{\pi,i} \in \{0,1\}$: 1 if the action at step $i$ in plan $p$ is executable. 
\item $\mathrm{pre\_unsat}_{\pi,i,f} \in \{0,1\}$: 1 if precondition $f$ is present and not satisfied at step $i$ in plan $\pi$. 
\end{itemize}

The objective is to minimize the total number of action modifications, specifically the addition of new preconditions, removal of add effects, and addition of delete effects:

\begin{align}
\min \quad &
\sum_{a \in \actions^{\Pi},\, f \in \fluents \setminus \mathrm{pre}(a)} {\mathrm{pre}}_{a,f}
\\
&\quad +\;
\sum_{a \in \actions^{\Pi},\, f \in \mathrm{add}(a)} {\mathrm{add}}_{a,f}
\\
&\quad +\;
\sum_{a \in \actions^{\Pi},\, f \in \fluents \setminus \{\mathrm{del}(a) \cup \mathrm{add}(a)\}} {\mathrm{del}}_{a,f}
\end{align}

The constraints ensure the correct propagation of fluents, satisfaction of preconditions, and blocking of plans:

\begin{enumerate} \item \textbf{Initial State:} The initial value of each fluent for each plan is set according to the initial state $\init$. We include the following constraints for each $\plan \in \Pi(\task)$ and $f \in \fluents$:
\begin{align}
s_{\pi,0,f} = 1 \quad \text{if} \quad f \in \init
\end{align}
\begin{align}
s_{\pi,0,f} = 0 \quad \text{if} \quad f \notin \init
\end{align}

\item \textbf{Precondition Satisfaction:} For each action in each plan, the variable $\mathrm{pre\_unsat}_{\pi,i,f}$ captures whether precondition $f$ of action $a_i$ is unsatisfied, considering both original and newly added preconditions. We include the following constraints for each $\plan \in \Pi(\task)$, $a_i \in \plan$, and $f \in \fluents$:

\noindent If $f \in \precondition(a)$:
\begin{align}
    \small
    \mathrm{pre\_unsat}_{\pi,i+1,f}  =  1 - s_{\pi,i,f}
\end{align}

\noindent If $f \notin \precondition(a)$:
\begin{align}
    \small
    \mathrm{pre\_unsat}_{\pi,i+1,f} \geq \mathrm{pre}_{a,f} - s_{\pi,i,f} 
\end{align}
\begin{align}
    \small
    \mathrm{pre\_unsat}_{\pi,i+1,f} \leq \mathrm{pre}_{a,f}  
\end{align}
\begin{align}
    \small
    \mathrm{pre\_unsat}_{\pi,i+1,f} \leq 1 - s_{\pi,i,f}  
\end{align}

\item \textbf{Action Enabledness:} An action is enabled if all its preconditions are satisfied. We include the following constraints for each $\plan \in \Pi(\task)$ and $a_i \in \plan$:
\begin{align}
    \mathrm{enabled}_{\plan,i+1} \geq 1 - \sum_{f \in \fluents} \mathrm{pre\_unsat}_{\pi,i,f}
\end{align}

\item \textbf{State Propagation:} The fluents are updated according to the effects of the actions and the modifications.  We include the following constraints for each $\plan \in \Pi(\task)$, $a_i \in \plan$, and $f \in \fluents$:

\noindent If $f \in \addeffects(a)$:
\begin{align}
    \small
    s_{\pi,i+1,f}  \geq  1 - \mathrm{add}_{a,f}
\end{align}

\begin{align}
    \small
    s_{\pi,i+1,f} - s_{\pi,i,f}  \geq  1 - \mathrm{add}_{a,f}
\end{align}

\begin{align}
    \small
    s_{\pi,i,f} - s_{\pi,i+1,f}  \geq  1 - \mathrm{add}_{a,f}
\end{align}

\noindent Else If $f \in \deleffects(a)$:
\begin{align}
    \small
    s_{\pi,i+1,f} = 0
\end{align}

\noindent Else:
\begin{align}
    \small
    s_{\pi,i+1,f}  \leq  1 - \mathrm{del}_{a,f}
\end{align}

\begin{align}
    \small
    s_{\pi,i+1,f} - s_{\pi,i,f}  \leq  1 - \mathrm{del}_{a,f}
\end{align}

\begin{align}
    \small
    s_{\pi,i,f} - s_{\pi,i+1,f}  \leq  1 - \mathrm{del}_{a,f}
\end{align}

\begin{align}
    \small
    s_{\pi,i+1,f} - s_{\pi,i,f}  \leq  0
\end{align}

\begin{align}
    \small
    s_{\pi,i,f} - s_{\pi,i+1,f}  \leq  0
\end{align}

\item \textbf{Plan Blocking:} To ensure every plan is blocked, at least one action in each plan must not be enabled. We include the following constraints for each $\pi \in \Pi(\task)$:
\begin{align}
    \small
    \sum_{i=1}^{|\pi|} \mathrm{enabled}_{\pi,i} \leq |\pi| - 1
\end{align}

\item \textbf{Goal Persistence}: We impose the following restriction to enforce that the goal is not modified, i.e., the preconditions of $a^g$ cannot be modified.
\begin{align}
    \sum_{f \in \fluents \setminus \goal} \mathrm{pre}_{a^g,f} = 0 
\end{align}

\end{enumerate}

This MILP identifies the minimal set of action modifications needed to block all plans. By incorporating new preconditions ($\mathrm{pre}_{a,f}=1$), adding delete effects ($\mathrm{del}_{a,f}=1$), and removing specified add effects ($\mathrm{add}_{a,f}=1$), the resulting set of actions $\actions^\prime$ ensures that the original planning task becomes unsolvable. 
As a result, $\actions^\prime$ serves as a shielding solution for the original planning task $\task$.

\section{Search in the Space of Modifications}\label{sec:search}

Our second approach frames the shielding problem as a search over the space of action modifications.
A \emph{search state} is a set of action modifications applied cumulatively to $\actions$, yielding a modified action set $\actions'$.
A \emph{search action} is a single atomic modification from Definition~\ref{def:action-modification}, and the  goal is to reach a state where the resulting planning task is unsolvable.
Even for small benchmarks, the pool of candidate modifications is large: every action contributes one candidate per fluent in each of its three components, and grounded domains can have many actions and fluents.
Exhaustive enumeration is therefore infeasible: we reduce the effective branching factor through a shared action-filtering mechanism inspired by counterexample-guided search~\cite{seipp2018counterexample}.
At each state, the planner is called on the current modified domain and if the task is unsolvable, the search has found a solution.
Otherwise, the returned plan is a counterexample: it witnesses that the domain is still solvable and identifies precisely which actions enable it.
The available modifications for the next step are therefore filtered to those that (a) target an action appearing in the current plan, and (b) actually invalidate that plan when simulated.
This reduces the effective branching factor substantially and focuses the search on modifications that are structurally relevant.

\textbf{Breadth-first search.} \bfs exhaustively tests all states reachable with exactly $k$ modifications before considering any state requiring $k{+}1$.
This guarantees that the first solution found uses the minimum number of modifications, i.e., \bfs returns an optimal shielding solution in the sense of Definition~\ref{def:optimality}.
The cost is a branching factor that grows combinatorially with depth, making \bfs impractical beyond the first few levels on larger domains.

\textbf{Depth-first search.} \dfs follows a single branch to the specified maximum depth before backtracking.
It typically reaches a solution faster than \bfs when one exists at moderate depth, but offers no optimality guarantee.
Moreover, for some domains the solution may be at a higher depth than the given maximum.
More fundamentally, DFS commits fully to the choices made along its current branch: without any signal to distinguish which modifications are likely to render the task unsolvable, it may spend most of its budget exploring an unproductive region of the space and only redirect its effort after exhausting it entirely.

\textbf{Monte Carlo Tree Search.} \mcts addresses the lack of guidance in \dfs by using the simulation score to direct exploration toward promising regions of the search space.
We use UCT with progressive widening: $\alpha$ controls the trade-off between depth and breadth, a lower value keeps the tree narrow and deep, which is effective when solutions require combining several modifications, while a higher value widens the tree earlier, favouring search over combinations of fewer modifications.  

The simulation step is the planner call and the counterexample-guided filtering described above, which determines the available modifications for future expansions. 
Unsolvable nodes receive a score of $1$.
For solvable nodes, the simulation score encodes the hypothesis that longer plans and longer solve times signal a domain that is harder to solve, and thus closer to becoming unsolvable.
We make no claim that this score is a reliable predictor of unsolvability: there is no domain-independent proxy for how close a modified domain is to becoming unsolvable.
The score is a heuristic, computed for free from information the planner already returns, and its potential value lies in providing \emph{some} signal to UCT rather than none.
On domains with many solutions, solve times at shallow depths are nearly identical; plan length provides a more discriminating signal and is therefore the primary component, with solve time acting as a tiebreaker within the same length bucket.
Both are min-max normalised across all nodes seen so far:
\begin{equation}\label{eq:score}
  s_l = \frac{l - l_{\min}}{l_{\max} - l_{\min}}, \qquad
  s_t = \frac{t - t_{\min}}{t_{\max} - t_{\min}},
\end{equation}
where $l$ and $t$ are the plan length and solve time of the node.
Formally, letting $b_l = \min\!\left(9,\, \lfloor s_l \cdot 10 \rfloor\right)$, the base score is:
\begin{equation}\label{eq:base-score}
  \mathit{score} = \frac{b_l}{10} + s_t \cdot 0.09.
\end{equation}
This ensures plan length dominates: no amount of solve-time difference can move a node across a length bucket boundary.
When a new extreme value updates $l_{\min}$, $l_{\max}$, $t_{\min}$, or $t_{\max}$, all nodes in the tree are rescored to maintain a consistent normalisation.
In practice, the search explores at most a few hundred nodes, so the cost of rescoring is negligible compared to individual planner calls.
\section{Action Goal Achievers Analysis}
Our third and final approach for solving shielding planning tasks is based on analyzing \emph{action goal achievers}.

The approach is described in Algorithm~\ref{alg:AGA}.
It first considers only those goal fluents that are not already true in the initial state, i.e., $\goal \setminus \init$. For each $\subgoal \in \goal \setminus \init$, we calculate the set of \emph{achievers}, i.e., the actions where $\subgoal$ appears in the add effects.
The algorithm then selects the $\subgoal$ with the smallest number of action achievers and repairs the task by deleting exactly the corresponding add effects that could achieve $\subgoal$, leaving all other aspects of each action unchanged.

Intuitively, if we prevent the achievement of at least one goal $\subgoal \in \goal$, then the entire goal $\goal$ becomes unreachable, rendering the modified task unsolvable. 
Although this greedy selection of $\subgoal$ may not be optimal with respect to minimizing the number of modifications, it enables us to make the planning task unsolvable without performing a search, unlike previous approaches.

\begin{algorithm}[t]
    \begin{algorithmic}[1]
        \renewcommand{\algorithmicrequire}{\textbf{Input:}}
        \Require Task $\task=\langle\fluents,\actions,\init,\goal\rangle$ 
        \State $m \gets +\infty$
        \ForAll{$g \in \goal \setminus \init$}
          \State $A \gets \{a\in\actions \mid g\in \addeffects(a)\}$
          \If{$|A| < m$}
            \State $g^\star \gets g$;\;\; $A^\star \gets A$ ;\;\; $m \gets |A|$
          \EndIf
        \EndFor
        \ForAll{$a \in A^\star$}
          \State $\addeffects(a) \gets \addeffects(a)\setminus\{g^\star\}$
        \EndFor
        \State \Return $\task'$
    \end{algorithmic}
    \caption{Algorithm $\mathsf{AGA}$}
    \label{alg:AGA}
\end{algorithm}
\section{Evaluation}
\subsection{Experimental Setting}
\paragraph{Benchmark.} 
We use two benchmarks to evaluate \allplansmin.
In the first one, we use a synthetic benchmark (\textsc{synth}) where we generate planning tasks in the form of a graph where we control: the number of plans ($8,16,32,64$), their maximum ($4,8,16,32$) and minimum ($2,4,6,8$) length, and the percentage of plans that share some edges with other plans ($0.4$).
The numbers of fluents, actions, and states range from a few dozen in the smaller instances with 8 plans to a few thousand in the larger instances with $64$ plans.
We generate $10$ random problems for each of the $4$ combinations, giving us a total of $40$ problems of increasing complexity.
The reason we generated this synthetic benchmark is threefold. 
First, it allows us to control the number of plans that solve the planning task, which is not possible when working with existing tasks or using generators for known domains~\cite{torralba2021automatic}. 
Second, tasks in current benchmarks are typically designed to be challenging for planners, often resulting in tasks that are too large to serve as meaningful shielding tasks. 
Finally, in practice, we would expect systems to have only a few ways of reaching a flawed state, rather than the hundreds of thousands of plans typically found in most planning tasks from existing benchmarks~\cite{speck-et-al-aaai2020}. 
For the second benchmark, we chose the $10$ smallest problems from the following domains in the Fast Downward~\cite{helmert2006fast} benchmark collection\footnote{https://github.com/aibasel/downward-benchmarks}: \textsc{blocksworld}, \textsc{rovers}, \textsc{satellite}, \textsc{tpp} and \textsc{depot}.
By selecting this concise yet varied set of standard planning tasks, our goal is to demonstrate how the three approaches can be applied to transform unsolvable problems from domains that are well-known within the community.

\paragraph{Approaches and Metrics.} We evaluate \allplansmin, \bfs, \dfs, \mcts and \compilation on the benchmarks described above.
We use the \textsc{symk} planner~\cite{speck-et-al-aaai2020} to compute all simple plans that solve a given planning task~\cite{vontschammer-et-al-icaps2022}. 
Given that computing all simple plans is not feasible for many tasks in the benchmark, we limit \textsc{symk} to output 10000 plans for \allplansmin, and
we also evaluate two variants: $\allplansmin^{10}$ and $\allplansmin^{100}$. 
Instead of computing all simple plans, these variants compute respectively 10 and 100 plans that solve the task.
Problems with more plans than the \textsc{symk} limit lose the effectiveness and optimality guarantees of the modifications computed by the MILP, but may be sufficient to render the task unsolvable nonetheless.
We then solve the resulting MILPs using the HiGHS solver~\cite{hall2019highs} to determine the minimal modifications required to make the planning task unsolvable.
We also test \dfs with different maximum depth (modifications) $d\in\{5,10,15\}$, denoted with $\dfs^{d}$.
We test \mcts with progressive widening with $\alpha = 0.2$.
For each algorithm, we evaluate:
\begin{enumerate}
    \item Completed ($C$): the number of instances returning a set of modifications $M$ within the time and memory bounds.
    \item Success ($S)$: the number of instances returning a set of modifications $M$ that turn the planning task unsolvable.

    \item Time ($s$): the total execution time (in seconds) required to generate a solution. 

    \item $|M|$: the number of modifications in the solution
\end{enumerate}
We validate that the suggested changes turn the planning task unsolvable by calling \textsc{symk} again and verifying there are no plans that solve the reformulated task $\task^\prime$.

\paragraph{Reproducibility.} Experiments were run on an 8-core, 2.8GHz CPU machine with 32GB RAM, with a time limit of $1800$s for each shielding task.

\subsection{Results}  Figure~\ref{fig:success_analysis} summarizes the number of successful modifications for each method. 
\compilation succeeds for all 90 benchmarks and is also the fastest method. 
However, \compilation is suboptimal and uses on average a higher number of modifications in all benchmarks but \textsc{blocksworld} and \textsc{satellite} (Figure~\ref{fig:calendario}).
On synthetic benchmarks, \compilation returns an average of 47.8\% more modifications (w.r.t. the optimal \allplansmin), and up to 9 modifications more on the planning benchmarks w.r.t. the subset solved by the optimal \bfs.

The second best method is $\allplansmin^{100}$ with 49 successes out of 90, representing a balanced compromise between $\allplansmin^{10}$, which returns insufficient modifications in 50 instances, and \allplansmin, which runs out of memory in 43 instances.
The tradeoff is clear: enumerating more plans gives more information to the MILP to find a set of effective modifications, at a cost of more memory and time resources. 

In the majority of the benchmarks, 10 plans are not representative enough to find a set of modifications that invalidates all possible plans, thus $\allplansmin^{10}$ struggles with rendering the whole task unsolvable. 
This behavior is evident in the \textsc{synthetic} benchmarks, which admit many independent paths to the goal. 
While $\allplansmin^{10}$ computes valid shielding solutions for all tasks in the set with $8$ plans (\textsc{synth4}), it does not succeed in any of the other synthetic tasks.
The reason is that the MILP returns the minimum number of modifications for the $10$ computed plans, invalidating those plans but potentially leaving other valid solutions unaffected.

\begin{figure}[t]
    \centering
    \includegraphics[width=1\linewidth]{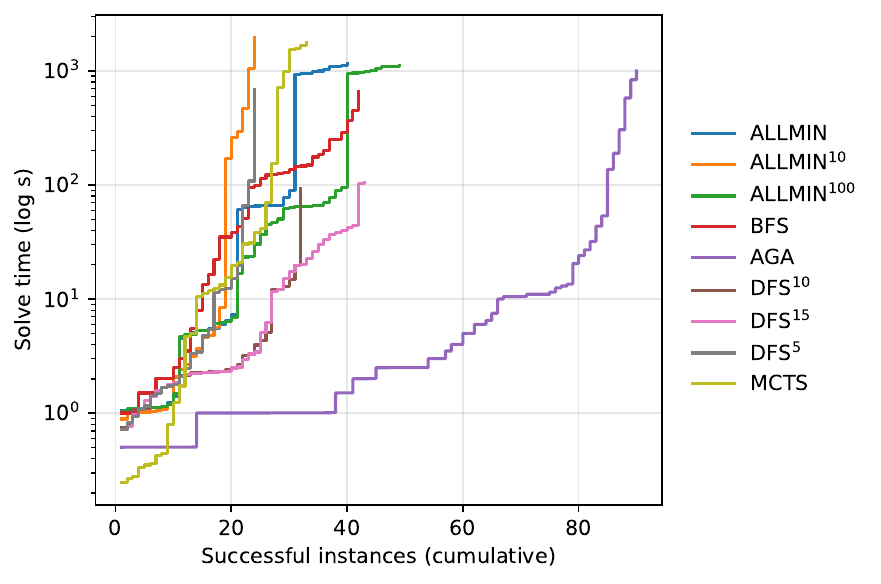}
    \caption{$\compilation$ succeeds on all instances with faster runtime, $\allplansmin^{100}$ and $\dfs^{15}$ follow.}
    \label{fig:success_analysis}
\end{figure}
On the other hand, $\allplansmin$ guarantees to return a (minimal) set of modifications invalidating all plans, but \textsc{symk} often cannot complete this enumeration within the memory and time boundaries.
We analyzed how the execution time is distributed between the two main components of \allplansmin: generating all plans that solve the task using \textsc{symk}, and solving the MILP to determine the necessary modifications.
The execution time of \allplansmin increases exponentially as the size of the planning task and the number of plans that solve it ($|\Pi(\task)|$) grow, rising from a few seconds for smaller tasks to hundreds of seconds for larger ones (runtime details are available in the supplementary material).

\begin{figure*}[t]
    \centering
    \includegraphics[width=1\linewidth]{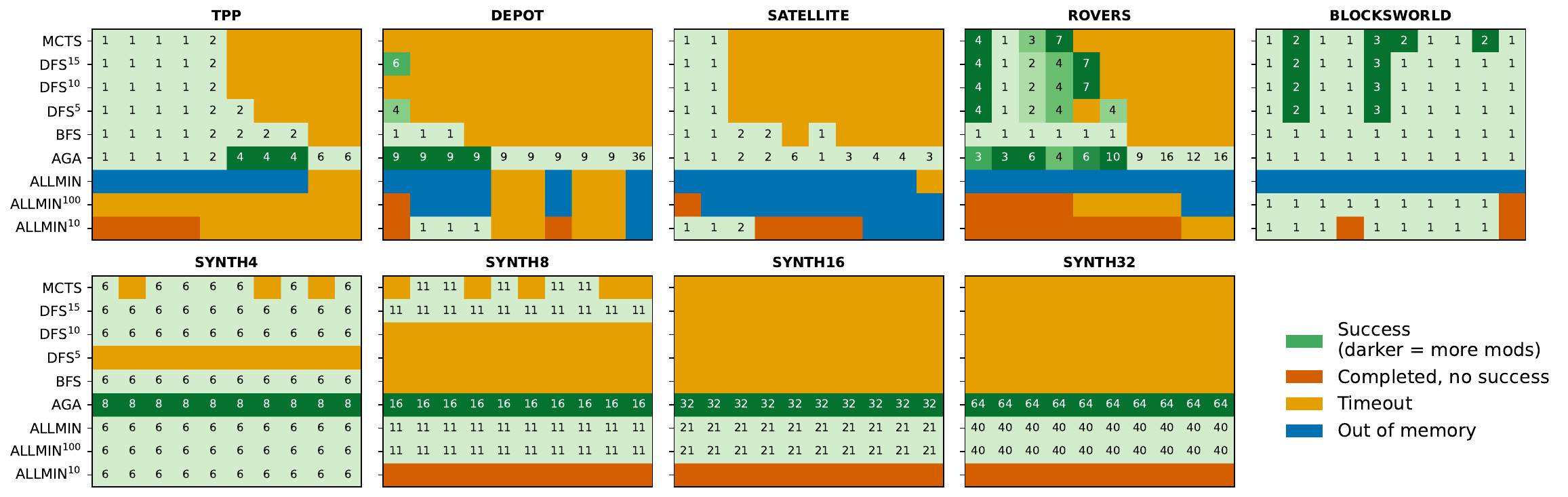}
    \caption{Experimental results on the 90 benchmark problems. Each column denotes one of the 10 problems for each domain.}
    \label{fig:calendario}
\end{figure*}

The planning domains proved to be difficult for \allplansmin and its variants, with $\allplansmin^{10}$ alone scoring some successes in \textsc{satellite} and \textsc{depot} and together $\allplansmin^{100}$ solving most of \textsc{blocksworld}.
For \textsc{depot} and \textsc{tpp} the bottleneck is \textsc{symk}, as \allplansmin reaches the time limit before concluding the collection of the given number of plans. 
On the other hand, in \textsc{satellite}, \textsc{rovers}, \textsc{blocksworld} and the larger \textsc{synth} benchmarks, the time dedicated to solving the MILP dominates the \textsc{symk} runtime. 
This suggests that, for most benchmarks, the planning tasks are easier to solve than the optimization problem, which becomes increasingly complex as the number of actions, fluents, and plans grows, leading to a larger number of variables and constraints.
We observe that computing a subset of all plans often enables $\allplansmin^{10}$ and $\allplansmin^{100}$ to achieve strong empirical performance, although there is no guarantee regarding the success of the returned modifications.
$\allplansmin^{100}$ thus balances the amount of information collected about the plans and the time spent to compute it.

Among the search methods, $\dfs^{15}$ solves the most instances with 43 successful modifications, followed by \bfs at 42. 
While \bfs is optimal, \dfs is faster: \dfs finds a successful set of modifications in 36.4\% less time on average, but with an average of 79.2\% more modifications.
\mcts offers no advantage over \dfs and \bfs in either success rate or number of modifications, showing that plan length and solving time do not represent a reliable signal to guide the exploration of the action modifications.

With respect to the number of modifications, we observe two main groups. 
For the standard planning benchmarks, \bfs is more reliable than \allplansmin at finding an optimal solution, as in many cases a single modification renders the planning task unsolvable ($|M|=1$). 
This occurs because, in many of these tasks, removing a landmark fact or adding preconditions that are never achievable in certain states can make the planning task unsolvable with just one change. 
In contrast, the \textsc{synth} benchmarks are more challenging, as they are designed with several independent paths leading to the goal and fewer bottlenecks.
Starting from \textsc{synth}8, the number of modifications required (11) is sufficiently large for \bfs to run out of time exploring all possible combinations of modifications up to $|M|=11$, while $\dfs^{15}$ and \mcts can still solve many \textsc{synth}8 problems. 
However, \textsc{synth}16 and \textsc{synth}32 require a number of modifications beyond reach for the search methods, while \textsc{symk} can complete the plan enumeration quickly, allowing \allplansmin to compute an optimal set of modifications for all \textsc{synth} problems.
$|M|$ is always lower than the number of plans, indicating that \allplansmin effectively identifies actions shared across multiple plans and modifies them so that several plans become invalid simultaneously.

We also inspected and compared some of the modifications proposed by the algorithms.
We noted some modifications that exploit the possibility of rendering some actions inapplicable by adding as preconditions ground facts that can never be satisfied in the domain.
For example, in \textsc{blocksworld}, adding the fluent $on(d,d)$ to any action precondition precludes the use of the action in any plan. 
This type of modifications can be prevented by restricting the set $\mathcal{M}$ of allowed modifications.
Another example of modification that stretches the domain semantics is the addition of a fluent $f$ as precondition of an action $a$ that has the same $f$ as effect: when $a$ is the only action that can add $f$, the precondition can never be satisfied and $a$ can never be used.
For example, in \textsc{satellite}, the action $a=\mathit{calibrate\_satellite0\_instrument0\_groundstation2}$ 
has as effect $f=\mathit{calibrated(instrument0)}$ and the modification proposed by \mcts is to add $f$ as a precondition of $a$. Note that this is equivalent to removing $f$ from the effects of $a$, which is what \bfs and \dfs return in this specific instance.

\section{Conclusions and Future Work}
In this paper, we introduce planning task shielding: the problem of identifying the plans that lead to a flawed state in the original planning task and then automatically making the task unsolvable by minimally modifying the original set of actions.
We formalize this problem and show that it can be addressed by considering only a subset of possible modifications to the original actions: adding preconditions, removing add effects, and adding delete effects.
We then present three different approaches to solve these tasks, each aiming to minimize the number of modifications to the original actions that render the planning task unsolvable.
Our evaluation demonstrates that our proposed approaches can effectively render planning tasks unsolvable, thereby shielding the system and preventing the existence of plans that reach flawed states.

Our current problem formulation consider granular modifications at the grounded action level.
As next steps, we want to explore the lifted version of the problem, where we reason about modifying action schemes rather than grounded actions. 
We would also like to consider additional objectives, such as minimizing the number of actions to which modifications are applied, or minimizing the number of fluents used in the modifications.

\section*{Disclaimer}
This paper was prepared for informational purposes by the Artificial Intelligence Research group of JPMorgan Chase \& Co. and its affiliates ("JP Morgan'') and is not a product of the Research Department of JP Morgan. JP Morgan makes no representation and warranty whatsoever and disclaims all liability, for the completeness, accuracy or reliability of the information contained herein. This document is not intended as investment research or investment advice, or a recommendation, offer or solicitation for the purchase or sale of any security, financial instrument, financial product or service, or to be used in any way for evaluating the merits of participating in any transaction, and shall not constitute a solicitation under any jurisdiction or to any person, if such solicitation under such jurisdiction or to such person would be unlawful.
© 2026 JPMorgan Chase \& Co. All rights reserved

\bibliography{aaai27}

\clearpage

\twocolumn[
  \centering
  {\Large\bfseries Supplementary Material\par}
  \vspace{2.0em}
]

\section{Plots}

Figure~\ref{fig:allplansmin_time} plots the average runtime breakdown for \allplansmin between the time dedicated to generating the given number of valid plans using \textsc{symk}, and solving the MILP to determine the necessary modifications. Bars with higher transparency denote the runs completed but not successful, i.e. returning a set of modifications $|M|$ such that the modified problem remains solvable. Bars exceeding the timeout are due to \textsc{symk} returning a plan after the given timeout expired. 

\begin{figure*}[h!]
    \centering
    \includegraphics[width=1\linewidth]{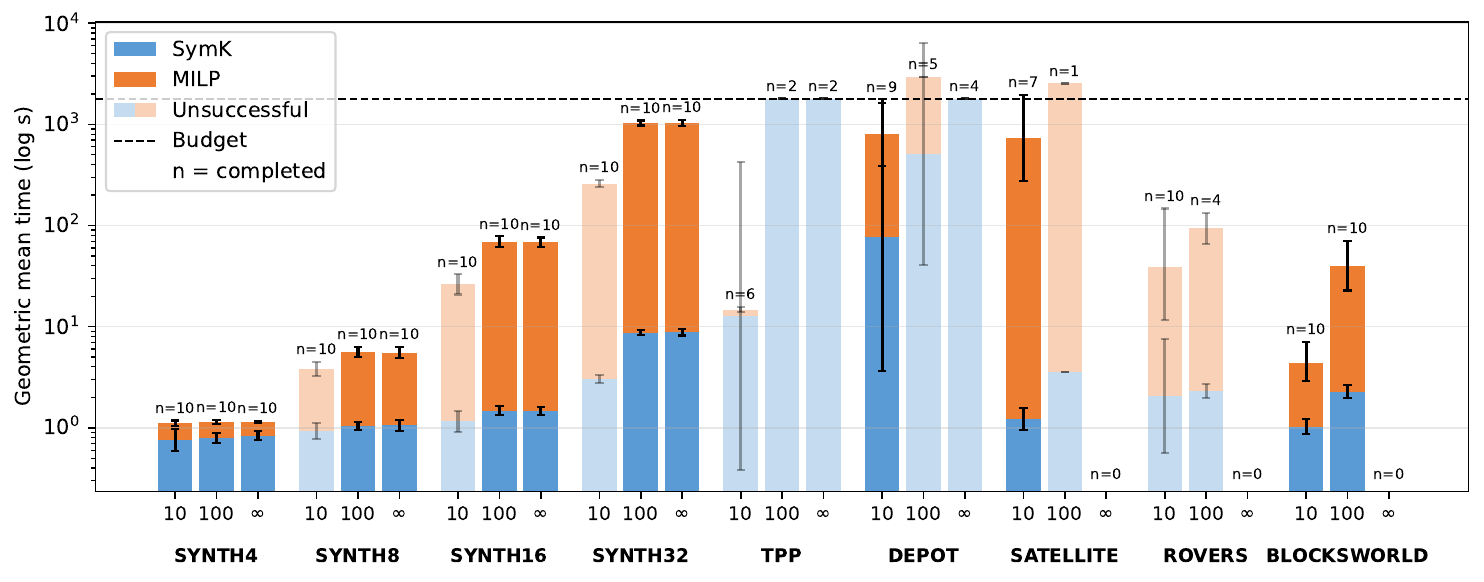}
    \caption{Execution time split into the time to compute the set of plans with \textsc{symk} (blue) and the time to compute the minimum number of modifications with the MILP (orange).}
    \label{fig:allplansmin_time}
\end{figure*}

Figure~\ref{fig:modcounts} shows the distribution of the number of modifications across methods and domains: \compilation on average returns a higher number of modifications across all benchmarks, while \allplansmin and \bfs are optimal.

\begin{figure*}[h!]
    \centering
    \includegraphics[width=1\linewidth]{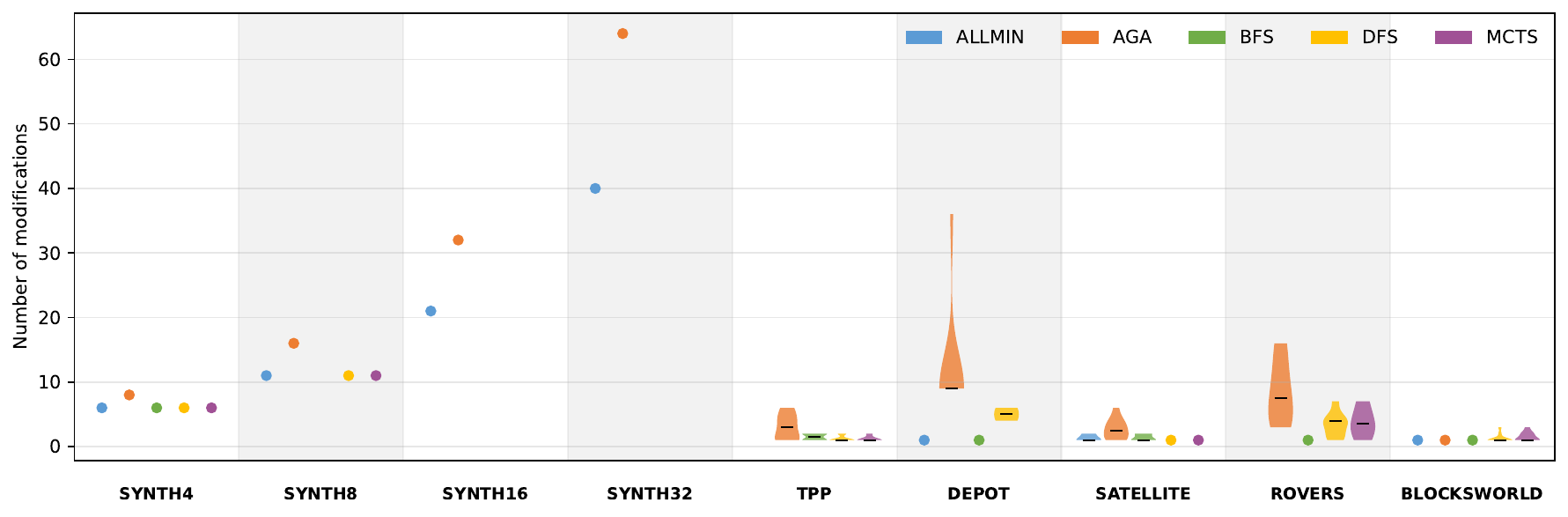}
    \caption{Number of modifications per benchmark and method. While \compilation successfully renders unsolvable all benchmarks, the average number of modifications is higher.}
    \label{fig:modcounts}
\end{figure*}

Figure~\ref{fig:runtime_analysis} shows the average runtimes for each method on the successful problems.

Note that both the mean number of modifications and mean runtime on some domain may not be extrapolated from the same set of problems.

\begin{figure*}[h!]
    \centering
    \includegraphics[width=1\linewidth]{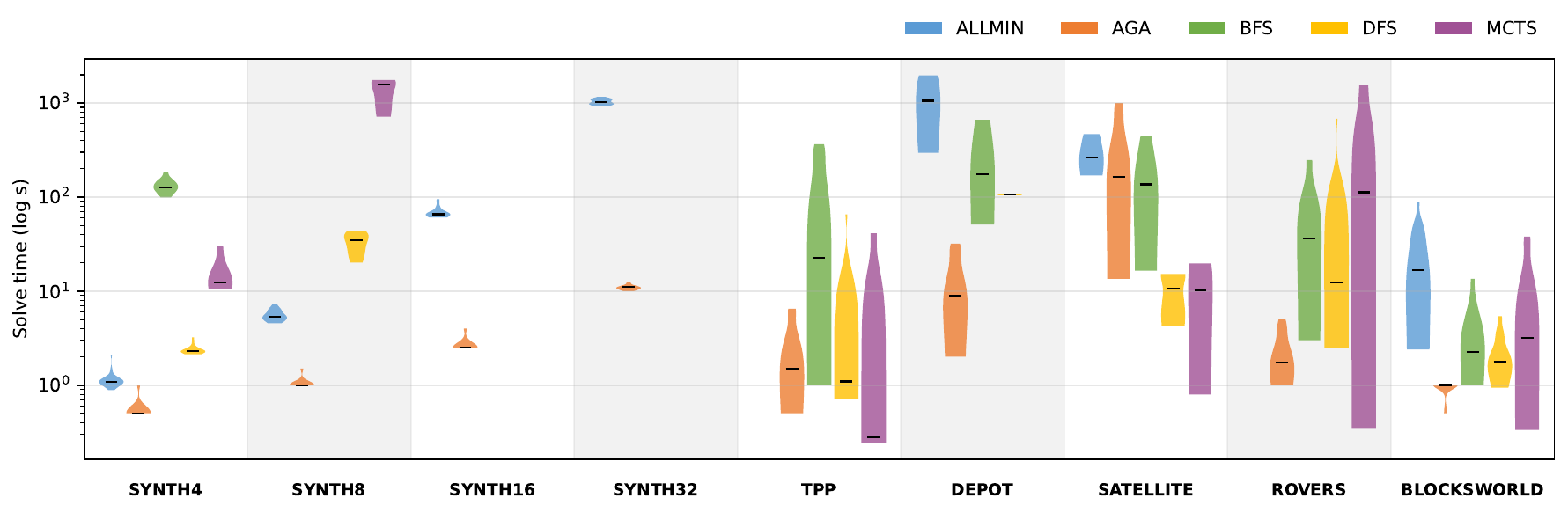}
    \caption{Mean execution time per benchmark and method. \compilation is consistently faster, while \mcts shows the higher variance across the search-based methods.}
    \label{fig:runtime_analysis}
\end{figure*}

\section{Benchmarks statistics}

Table~\ref{tab:results} presents the methods performance on the 9 benchmarked domains. As noted before, the average number of modifications and runtime is comparable across methods only when the set of successful problem is the same. Refer to Figure 2 in the main paper for a problem by problem comparison.

\begin{table*}[h!]
\centering
\resizebox{\textwidth}{!}{%
\begin{tabular}{l|r@{\hspace{3pt}}r@{\hspace{3pt}}rcc|r@{\hspace{3pt}}r@{\hspace{3pt}}rcc|r@{\hspace{3pt}}r@{\hspace{3pt}}rcc}
\toprule
Config & \multicolumn{5}{c}{SYNTH4} & \multicolumn{5}{c}{SYNTH8} & \multicolumn{5}{c}{SYNTH16} \\
 & T & C & S & $|M|$ & Time & T & C & S & $|M|$ & Time & T & C & S & $|M|$ & Time \\
\midrule
{\allplansmin}$^{10}$ & $10$ & $10$ & $\mathbf{10}$ & $\mathbf{6.0 \pm 0.0}$ & $1.1 \pm 0.3$ & $10$ & $10$ & $0$ & - & - & $10$ & $10$ & $0$ & - & - \\
{\allplansmin}$^{100}$ & $10$ & $10$ & $\mathbf{10}$ & $\mathbf{6.0 \pm 0.0}$ & $1.1 \pm 0.1$ & $10$ & $10$ & $\mathbf{10}$ & $\mathbf{11.0 \pm 0.0}$ & $5.6 \pm 0.7$ & $10$ & $10$ & $\mathbf{10}$ & $\mathbf{21.0 \pm 0.0}$ & $69.6 \pm 10.1$ \\
\allplansmin & $10$ & $10$ & $\mathbf{10}$ & $\mathbf{6.0 \pm 0.0}$ & $1.1 \pm 0.1$ & $10$ & $10$ & $\mathbf{10}$ & $\mathbf{11.0 \pm 0.0}$ & $5.6 \pm 0.8$ & $10$ & $10$ & $\mathbf{10}$ & $\mathbf{21.0 \pm 0.0}$ & $68.6 \pm 8.7$ \\
\compilation & $10$ & $10$ & $\mathbf{10}$ & $8.0 \pm 0.0$ & $\mathbf{0.6 \pm 0.2}$ & $10$ & $10$ & $\mathbf{10}$ & $16.0 \pm 0.0$ & $\mathbf{1.1 \pm 0.2}$ & $10$ & $10$ & $\mathbf{10}$ & $32.0 \pm 0.0$ & $\mathbf{2.7 \pm 0.5}$ \\
\bfs & $10$ & $10$ & $\mathbf{10}$ & $\mathbf{6.0 \pm 0.0}$ & $134.2 \pm 23.7$ & $10$ & $0$ & $0$ & - & - & $10$ & $0$ & $0$ & - & - \\
{\dfs}$^{5}$ & $10$ & $0$ & $0$ & - & - & $10$ & $0$ & $0$ & - & - & $10$ & $0$ & $0$ & - & - \\
{\dfs}$^{10}$ & $10$ & $10$ & $\mathbf{10}$ & $\mathbf{6.0 \pm 0.0}$ & $2.4 \pm 0.3$ & $10$ & $0$ & $0$ & - & - & $10$ & $0$ & $0$ & - & - \\
{\dfs}$^{15}$ & $10$ & $10$ & $\mathbf{10}$ & $\mathbf{6.0 \pm 0.0}$ & $2.4 \pm 0.2$ & $10$ & $10$ & $\mathbf{10}$ & $\mathbf{11.0 \pm 0.0}$ & $33.4 \pm 8.3$ & $10$ & $0$ & $0$ & - & - \\
\mcts & $10$ & $7$ & $7$ & $\mathbf{6.0 \pm 0.0}$ & $15.8 \pm 7.3$ & $10$ & $5$ & $5$ & $\mathbf{11.0 \pm 0.0}$ & $1340.4 \pm 461.5$ & $10$ & $0$ & $0$ & - & - \\
\bottomrule
\end{tabular}}
\par\medskip
\resizebox{\textwidth}{!}{%
\begin{tabular}{l|r@{\hspace{3pt}}r@{\hspace{3pt}}rcc|r@{\hspace{3pt}}r@{\hspace{3pt}}rcc|r@{\hspace{3pt}}r@{\hspace{3pt}}rcc}
\toprule
Config & \multicolumn{5}{c}{SYNTH32} & \multicolumn{5}{c}{TPP} & \multicolumn{5}{c}{DEPOT} \\
 & T & C & S & $|M|$ & Time & T & C & S & $|M|$ & Time & T & C & S & $|M|$ & Time \\
\midrule
{\allplansmin}$^{10}$ & $10$ & $10$ & $0$ & - & - & $10$ & $6$ & $0$ & - & - & $10$ & $9$ & $3$ & $\mathbf{1.0 \pm 0.0}$ & $1107.4 \pm 838.2$ \\
{\allplansmin}$^{100}$ & $10$ & $10$ & $\mathbf{10}$ & $\mathbf{40.0 \pm 0.0}$ & $1032.4 \pm 62.2$ & $10$ & $2$ & $0$ & - & - & $10$ & $5$ & $0$ & - & - \\
\allplansmin & $10$ & $10$ & $\mathbf{10}$ & $\mathbf{40.0 \pm 0.0}$ & $1030.8 \pm 81.2$ & $10$ & $2$ & $0$ & - & - & $10$ & $4$ & $0$ & - & - \\
\compilation & $10$ & $10$ & $\mathbf{10}$ & $64.0 \pm 0.0$ & $\mathbf{11.0 \pm 0.7}$ & $10$ & $10$ & $\mathbf{10}$ & $3.0 \pm 2.1$ & $2.4 \pm 2.2$ & $10$ & $10$ & $\mathbf{10}$ & $11.7 \pm 8.5$ & $\mathbf{13.0 \pm 10.8}$ \\
\bfs & $10$ & $0$ & $0$ & - & - & $10$ & $8$ & $8$ & $1.5 \pm 0.5$ & $112.8 \pm 149.4$ & $10$ & $3$ & $3$ & $\mathbf{1.0 \pm 0.0}$ & $296.3 \pm 322.8$ \\
{\dfs}$^{5}$ & $10$ & $0$ & $0$ & - & - & $10$ & $6$ & $6$ & $1.3 \pm 0.5$ & $13.7 \pm 25.9$ & $10$ & $1$ & $1$ & $4.0 \pm 0.0$ & $108.5 \pm 0.0$ \\
{\dfs}$^{10}$ & $10$ & $0$ & $0$ & - & - & $10$ & $5$ & $5$ & $\mathbf{1.2 \pm 0.4}$ & $3.3 \pm 5.0$ & $10$ & $0$ & $0$ & - & - \\
{\dfs}$^{15}$ & $10$ & $0$ & $0$ & - & - & $10$ & $5$ & $5$ & $\mathbf{1.2 \pm 0.4}$ & $4.3 \pm 7.3$ & $10$ & $1$ & $1$ & $6.0 \pm 0.0$ & $104.7 \pm 0.0$ \\
\mcts & $10$ & $0$ & $0$ & - & - & $10$ & $5$ & $5$ & $\mathbf{1.2 \pm 0.4}$ & $8.5 \pm 18.3$ & $10$ & $0$ & $0$ & - & - \\
\bottomrule
\end{tabular}}
\par\medskip
\resizebox{\textwidth}{!}{%
\begin{tabular}{l|r@{\hspace{3pt}}r@{\hspace{3pt}}rcc|r@{\hspace{3pt}}r@{\hspace{3pt}}rcc|r@{\hspace{3pt}}r@{\hspace{3pt}}rcc}
\toprule
Config & \multicolumn{5}{c}{SATELLITE} & \multicolumn{5}{c}{ROVERS} & \multicolumn{5}{c}{BLOCKSWORLD} \\
 & T & C & S & $|M|$ & Time & T & C & S & $|M|$ & Time & T & C & S & $|M|$ & Time \\
\midrule
{\allplansmin}$^{10}$ & $10$ & $7$ & $3$ & $1.3 \pm 0.6$ & $299.1 \pm 152.1$ & $10$ & $8$ & $0$ & - & - & $10$ & $10$ & $8$ & $\mathbf{1.0 \pm 0.0}$ & $4.4 \pm 2.0$ \\
{\allplansmin}$^{100}$ & $10$ & $1$ & $0$ & - & - & $10$ & $4$ & $0$ & - & - & $10$ & $10$ & $9$ & $\mathbf{1.0 \pm 0.0}$ & $40.2 \pm 21.8$ \\
\allplansmin & $10$ & $0$ & $0$ & - & - & $10$ & $0$ & $0$ & - & - & $10$ & $0$ & $0$ & - & - \\
\compilation & $10$ & $10$ & $\mathbf{10}$ & $2.7 \pm 1.6$ & $317.6 \pm 361.2$ & $10$ & $10$ & $\mathbf{10}$ & $8.5 \pm 4.9$ & $\mathbf{2.3 \pm 1.4}$ & $10$ & $10$ & $\mathbf{10}$ & $\mathbf{1.0 \pm 0.0}$ & $\mathbf{1.0 \pm 0.2}$ \\
\bfs & $10$ & $5$ & $5$ & $1.4 \pm 0.5$ & $178.0 \pm 178.7$ & $10$ & $6$ & $6$ & $\mathbf{1.0 \pm 0.0}$ & $73.6 \pm 91.0$ & $10$ & $10$ & $\mathbf{10}$ & $\mathbf{1.0 \pm 0.0}$ & $4.1 \pm 4.0$ \\
{\dfs}$^{5}$ & $10$ & $2$ & $2$ & $\mathbf{1.0 \pm 0.0}$ & $9.9 \pm 7.3$ & $10$ & $5$ & $5$ & $3.0 \pm 1.4$ & $145.8 \pm 300.4$ & $10$ & $10$ & $\mathbf{10}$ & $1.3 \pm 0.7$ & $2.3 \pm 1.4$ \\
{\dfs}$^{10}$ & $10$ & $2$ & $2$ & $\mathbf{1.0 \pm 0.0}$ & $\mathbf{9.7 \pm 7.5}$ & $10$ & $5$ & $5$ & $3.6 \pm 2.3$ & $28.1 \pm 37.0$ & $10$ & $10$ & $\mathbf{10}$ & $1.3 \pm 0.7$ & $2.3 \pm 1.3$ \\
{\dfs}$^{15}$ & $10$ & $2$ & $2$ & $\mathbf{1.0 \pm 0.0}$ & $10.7 \pm 6.4$ & $10$ & $5$ & $5$ & $3.6 \pm 2.3$ & $30.0 \pm 41.8$ & $10$ & $10$ & $\mathbf{10}$ & $1.3 \pm 0.7$ & $2.3 \pm 1.3$ \\
\mcts & $10$ & $2$ & $2$ & $\mathbf{1.0 \pm 0.0}$ & $10.3 \pm 13.4$ & $10$ & $4$ & $4$ & $3.8 \pm 2.5$ & $439.8 \pm 732.1$ & $10$ & $10$ & $\mathbf{10}$ & $1.5 \pm 0.7$ & $9.8 \pm 13.9$ \\
\bottomrule
\end{tabular}}
\caption{Outcome breakdown and modification counts per method and benchmark. T= total problems; C=completed, S=success. T-C=out of memory or timeout, C-S=modifications do not render the task unsolvable.}
\label{tab:results}
\end{table*}

\end{document}